%

\documentclass[runningheads]{llncs}
\usepackage[T1]{fontenc}
\usepackage{graphicx}
\usepackage{booktabs}
\usepackage[misc]{ifsym}

\usepackage{cite}
\usepackage{threeparttable}
\usepackage{amsmath,amssymb,amsfonts}
\usepackage{algorithm}
\usepackage{algorithmic}
\usepackage{textcomp}
\usepackage{xcolor}
\usepackage{graphicx}
\usepackage{bbm}
\usepackage{bm}
\usepackage{subfigure}
\usepackage{soul}
\usepackage{xspace} 
\usepackage{appendix}
\def\BibTeX{{\rm B\kern-.05em{\sc i\kern-.025em b}\kern-.08em
    T\kern-.1667em\lower.7ex\hbox{E}\kern-.125emX}}

\makeatletter

\newcommand{\Rmnum}[1]{\expandafter\@slowromancap\romannumeral #1@}
\makeatother

\def\framework{SpanGNN\xspace}

%

\begin{document}
\title{\framework: Towards Memory-Efficient Graph Neural Networks via Spanning Subgraph Training}
\titlerunning{\framework: Towards Memory-Efficient Graph Neural Networks}
%
\author{Xizhi Gu\inst{1} \and
Hongzheng Li\inst{1} \and
Shihong Gao\inst{2} \and
Xinyan Zhang\inst{1} \and
Lei Chen\inst{3} \and
Yingxia Shao\inst{1} \thanks{Corresponding author}
\authorrunning{F. Author et al.}
%
\institute{Beijing University of Posts and Telecommunications \and
Hong Kong University of Science and Technology \and
Hong Kong University of Science and Technology (Guangzhou)\\
\email {\{guxizhi, Ethan\_Lee\}@bupt.edu.cn, sgaoar@connect.ust.hk, xianxx492@bupt.edu.cn, leichen@cse.ust.hk, shaoyx@bupt.edu.cn}}}
\maketitle              
\begin{abstract}
Graph Neural Networks (GNNs) have superior capability in learning graph data. Full-graph GNN training generally has high accuracy, however, it suffers from large peak memory usage and encounters the Out-of-Memory problem when handling large graphs. To address this memory problem, a popular solution is mini-batch GNN training. However, mini-batch GNN training increases the training variance and sacrifices the model accuracy. 
In this paper, we propose a new memory-efficient GNN training method using spanning subgraph, called \framework. 
\framework trains GNN models over a sequence of spanning subgraphs, which are constructed from empty structure. To overcome the excessive peak memory consumption problem, \framework selects a set of edges from the original graph to incrementally update the spanning subgraph between every epoch. To ensure the model accuracy, we introduce two types of edge sampling strategies (i.e., variance-reduced and noise-reduced), and help \framework select high-quality edges for the GNN learning. We conduct experiments with \framework on widely used datasets, demonstrating \framework's advantages in the model performance and low peak memory usage.

\end{abstract}
%
%
%

\section{Introduction}\label{Introduction}
Graph Neural Networks (GNNs) achieve the state-of-the-art performance on graph learning tasks, such as node classification~\cite{kipf2016semi, hamilton2017inductive, velivckovic2017graph}, link prediction~\cite{zhang2018link, schlichtkrull2018modeling} and graph classification~\cite{duvenaud2015convolutional, ying2018hierarchical}. They have been widely used in various domains, like social network analysis~\cite{gnn_sna, gnn_social_rec}, recommendation~\cite{gnn_rec, app_rec3, graph_contrastive_rec}, healthcare~\cite{gnn_health1, gnn_health2}, short-term load forecasting~\cite{short_term_load} and bio-informatics~\cite{gnn_bio, gnn_bio1}. 
Most of GNNs~\cite{kipf2016semi, hamilton2017inductive, velivckovic2017graph, xu2018powerful} follow the message passing paradigm~\cite{gnnmp_icml_2017}, which exploits graph topology and node/edge features simultaneously. In this paradigm, the edge related memory consumption predominantly influences the amount of peak GPU memory~\cite{wang2021empirical}. The edge calculation of GNNs involves four operations, which are collection, messaging, aggregation, and feature updating. The four operations require the storage of intermediate results (e.g., the updated embedding of edge feature and the aggregation of feature embedding), which are used for the gradient calculation in the subsequent backward propagation process. According to the existing empirical studies~\cite{wang2021empirical}, the peak memory consumption can be up to 100 times of the size of dataset itself. As a result, the high memory usage of the edge calculation restricts the GNNs scaling to large graphs.

Since the edge calculation is the main factor of high memory usage, an intuitive idea is to reduce the number of edges for training. Sampling is a standard technique to generate graphs with few edges. It has been well studied in the mini-batch training. Many works~\cite{hamilton2017inductive, ying2018hierarchical, 2018FastGCN, 2019Cluster, graphsaint} use various sampling techniques to create mini-batches, which are subgraphs rooted from a limited number of target nodes. Although mini-batch training is scalable and memory-efficient, it brings in non-negligible training variance and heavily compromises model accuracy. Full-graph GNN training is more accurate than mini-batch training~\cite{ROC_mlsys}. However, the existing complex sampling methods cannot be efficiently adopted to the full-graph GNN training. The sampling step is time-consuming and becomes the efficiency bottleneck for GNN training on large graphs~\cite{wang2021empirical}.
Unlike the sampling technique, DropEdge~\cite{Dropedge} randomly drops edges of the original graph during the full-graph training. It not only reduces the size of peak memory, but also is scalable to large graphs. Nonetheless, DropEdge also suffers from a prominent model accuracy loss as the edge drop ratio increases, especially on large graphs. This limitation arises because DropEdge treats all edges equally and ignores the inherent structure of the original graph. Therefore, {\em how to develop a memory-efficient and accurate full-graph GNN learning method remains unsolved}.


In this paper, we propose \framework to achieve memory-efficient full-graph GNN training while guaranteeing the model accuracy. First, \framework trains GNN models across a sequence of spanning subgraphs, which are constructed from empty structure. Each spanning subgraph contains significantly fewer edges than those present in the original graphs, thus effectively reducing the peak memory footprint. Furthermore, in each training epoch, \framework selects a set of edges from the original graph to incrementally update the spanning subgraph that used in the previous epoch. Meanwhile, the updated spanning graph always satisfies the sparsity constraint defined by the edge ratio $\alpha$ (See the definition in Section~\ref{sec:spangnn}).
Second, to guarantee the model accuracy and training efficiency, we propose a fast quality-aware edge selection method for \framework. We analyze the training variance and gradient noise that inherent in the spanning subgraph training framework, and propose variance-reduced sampling and gradient-noise reduced sampling strategies, respectively, to help \framework selects a set of high-quality edges and guarantee the model accuracy. However, it is expensive to directly apply the above two sampling strategies over large graphs, we introduce a two-step sampling method to speed up the edge selection process. Extensive experiments demonstrate that \framework is capable of saving over 40\% of GPU memory usage without compromising training performance.

Our main contributions are summarized as follows: 
\begin{itemize}
    \item We propose \framework that supports memory-efficient and accurate full-graph GNN training on large graphs. The new method reduces the peak memory usage significantly during the training, meanwhile achieving high model accuracy. 
    \item We introduce a fast quality-aware edge selection method to alleviate the negative impacts caused by spanning subgraph training and ensure training efficiency.
    \item We analyze the connection between \framework and curriculum learning~\cite{bengio2009curriculum}. With the help of quality-aware edge selection, \framework selects edges that are highly beneficial to the learning in priority, and then gradually uses edges with low benefits. 
    \item Experimental results on widely used datasets demonstrate that \framework reduces peak memory usage effectively while guaranteeing that the model accuracy is almost equivalent to the one of full graph training.
\end{itemize}

\section{Preliminary} \label{Preliminary}

\subsection{Graph Neural Networks}
The general matrix formulation of GNN models is as follows:
\begin{equation}
    Z^{(l)}=PH^{(l-1)}W^{(l-1)},\label{GNN1}
\end{equation}
\begin{equation}
    H^{(l)}=\sigma(Z^{(l)}), \label{GNN2}
\end{equation}
where $Z^{(l)}$, $H^{(l)}$, and $W^{(l)}$ represent the intermediate embedding matrix, feature embedding matrix and trainable weight matrix at $l$-th layer, respectively. $\sigma$ is a non-linear activation function, like ReLu. $P$ is the propagation matrix that is transformed from the graph adjacency matrix.

During the backward propagation, the gradient of the loss with respect to $W^{(l-1)}$ is as follows:
\begin{equation}
    \nabla_{W^{(l-1)}}L=\frac{\partial L}{\partial W^{(l-1)}}=(H^{(l-1)})^{T}P^{T}\delta^{(l)},\label{GNN3}
\end{equation}
where $\delta^{(l)}$ denotes the gradient of the loss with respect to $Z^{(l)}$. Then, $W^{(l-1)}$ is updated as follows:
\begin{equation}
    W^{(l-1)}=W^{(l-1)}-\eta \nabla_{W^{(l-1)}}L, \label{GNN4}
\end{equation}
where $\eta$ denotes the learning rate of training.



\subsection{Spanning Subgraph GNN Training}\label{sec:spangnn}
Given a graph $G=(V, E)$, a spanning subgraph $G_s=(V_s, E_s)$ generated from $G$ is a subgraph with vertex set $V$, i.e., $V_s=V$ and $E_s \subset E$~\cite{west_introduction_2000}. 
We define edge ratio $\alpha$ between $G_s$ and $G$ is $\frac{|E_s|}{|E|}$. $\alpha$ represents the degree of edge reduction of a spanning subgraph against the corresponding original graph. The smaller $\alpha$ is, the more edges are deleted, and less memory is demanded for training over the spanning subgraph. 

Spanning subgraph GNN training make GNNs only propagation along the subgraph $G_s$. Therefore, the key GNN operations (E.q.~\ref{GNN1}-\ref{GNN3}) are rewritten as: 

\begin{equation}
    \tilde{Z}^{(l)}=\tilde{P}H^{(l-1)}W^{(l-1)},\label{GNNRE1}
\end{equation}
\begin{equation}
    \tilde{H}^{(l)}=\sigma(\tilde{Z}^{(l)}),\label{GNNRE2}
\end{equation}
\begin{equation}
    \nabla_{W^{(l-1)}}\tilde{L}=\frac{\partial \tilde{L}}{\partial W^{(l-1)}}=(\tilde{H}^{(l-1)})^{T}\tilde{P}^{T}\tilde{\delta}^{(l)},\label{GNNRE3}
\end{equation}
where $\tilde{P}$ is the propagation matrix that is transformed from the spanning subgraph. 
Spanning subgraph GNN training results in approximated node embedding matrix $\tilde{H}^{(l)}$ and the approximated embedding gradients $\tilde{\delta}^{(l)}$. The model accuracy is affected by these approximated intermediate results as well. We will discuss the main factors that influence the model accuracy in Section~\ref{pro edge selection}.

\section{\framework: Memory-Efficient Full-graph GNN Learning}


\begin{figure*}[ht]
    \centering
    \includegraphics[width=\textwidth]{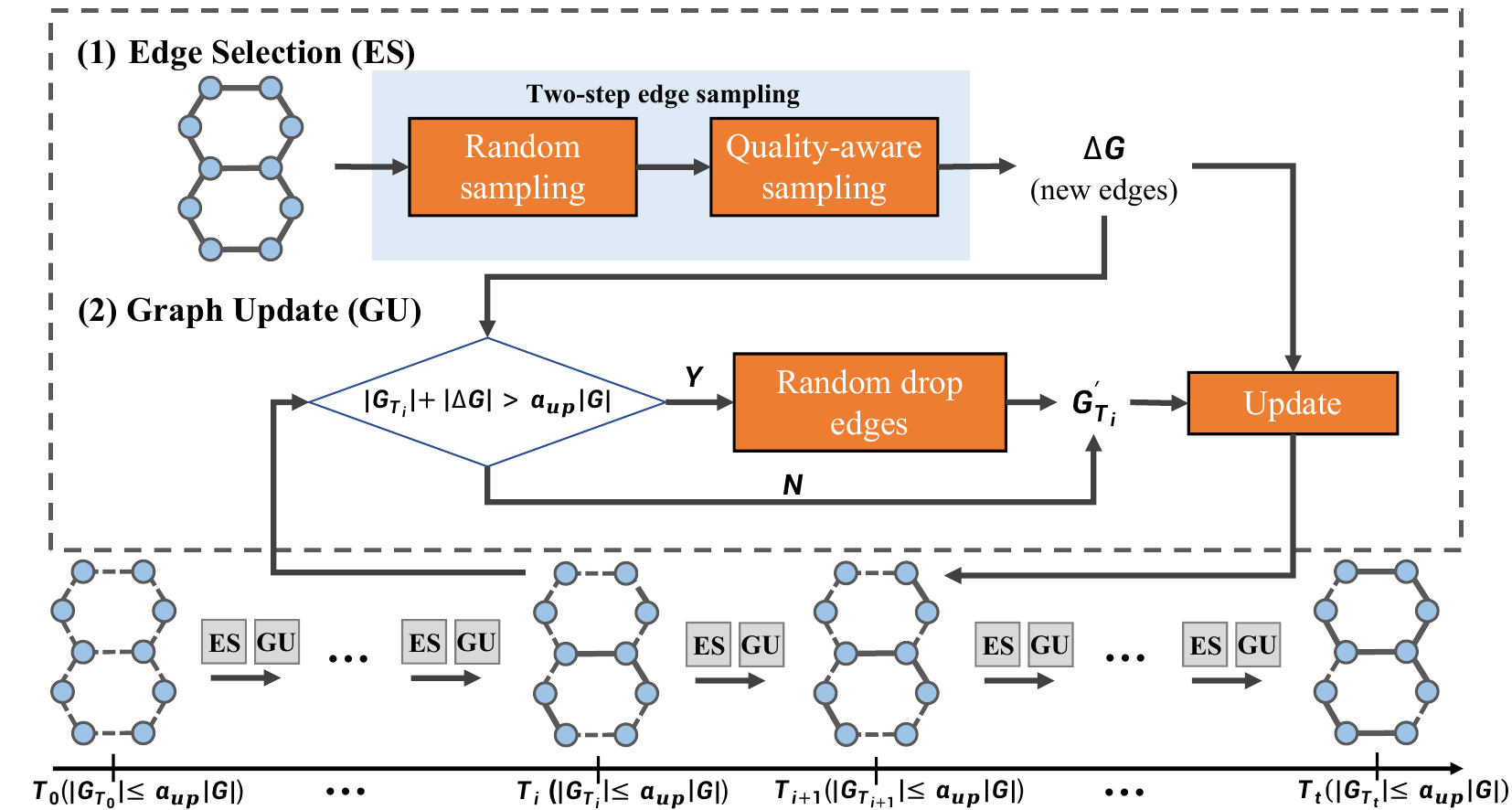}
    \centering
    \caption{The framework of \framework.}
    \label{framework_overview}
\end{figure*}

Figure~\ref{framework_overview} illustrates the overview of \framework. It starts with an empty spanning subgraph $G_{T_0}$ and progressively includes more edges during the training. For every training epoch $T_i$, \framework selects a set of edges from the original graph $G$, updates the spanning subgraph $G_{T_{i-1}}$ with the selected edge set, and generates a new spanning subgraph $G_{T_i}$. Furthermore, to limit the peak memory usage, \framework guarantees that, in each training epoch, the edge ratio $\frac{|E_{T_i}|}{|E|}$ does not exceed $\alpha_{up}$, which is preset by users. According to the definition of edge ratio, the parameter $\alpha_{up}$ implies the upper bound of peak memory usage during the training. Therefore, \framework is able to control the maximal size of peak memory flexibly and is memory-efficient. 

Algorithm~\ref{alg:Framwork} illustrates the whole process of spanning subgraph generation and GNN model training. Starting with an empty spanning subgraph (Line 2), \framework selects $\Delta G$ via quality-aware edge selection in every epoch (Line 4) and updates the spanning subgraph according to the condition $|G_{T_{i}}| + |\Delta G| \geq \alpha_{up}|G|$ (Line 5). $RandomDrop$ is the function that randomly deletes a fixed proportion of edges from the input graph (Line 6). $GraphUpdate$ is the function that merges the new selected edges into the current graph (Lines 7,9).

\begin{algorithm}[ht]
\caption{ Framework of \framework.}
\label{alg:Framwork}
\begin{algorithmic}[1] 
    \REQUIRE The graph $G$, GNN model's parameters $W$, the upper bound of edge ratio $\alpha_{up}$, the ratio of dropping edges $\beta$
    \STATE $i=0$ ;
    \STATE Initialize $G_{T_{0}}$ with empty edge;
    \WHILE{$i \leq t$}
        \STATE $\Delta G = QualityAwareEdgeSelect(G)$;
        \IF{$|G_{T_{i}}| + |\Delta G| \geq \alpha_{up}|G|$}
            \STATE $G_{T_{i}}^{'} = RandomDrop(G_{T_{i}}, \beta)$;
            \STATE $G_{T_{i+1}} = GraphUpdate(G_{T_{i}}^{'}, \Delta G)$;
        \ELSE
            \STATE $G_{T_{i+1}} = GraphUpdate(G_{T_{i}}, \Delta G)$;
        \ENDIF
        \STATE Training GNN model on $G_{T_{i+1}}$ and update model's parameters $W$;
        \STATE $i=i+1$ ;
    \ENDWHILE
\end{algorithmic}
\end{algorithm}


\textbf{Edge Selection.} Edges in the graph contribute differently to the GNN training, so it is important to pick out the most beneficial edges for the training to guarantee the model accuracy. Weighted sampling is a standard approach to select important edges in priority. In this paper, we analyze two types of factors that influence the model accuracy and propose quality-aware edge selection approach in Section~\ref{pro edge selection}. The new edge selection approach adopts variance-reduced sampling strategy and gradient-noise reduced sampling strategy to select high-quality edges. However, directly sampling from the entire graph with non-uniform probability distribution is time-consuming. We further introduce the two-step edge sampling method to speed up the edge selection. In addition, using the quality-aware edge selection approach, the training process of \framework aligns with the principles of curriculum learning (discussed in Section~\ref{cl}), therefore, \framework has high accuracy.

\textbf{Graph Update.} In order to continuously satisfy the edge ratio constraint (i.e., $\frac{|G_{T_i}|}{|G|} \le \alpha_{up}$), we introduce an edge drop step in the graph update. In each training epoch $T_i$, if \framework detects that the new spanning subgraph will violate the edge ratio constraint, it first randomly drops a set of edges in $G_{T_{i-1}}$, then adds the selected edge set to the subgraph $G_{T_{i-1}}$; otherwise, the selected edge set is directly added into the subgraph $G_{T_{i-1}}$. The edge drop step helps \framework ensure the memory-efficiency. More importantly, it improves the diversity of the trained spanning subgraphs and enhances the model accuracy as well.

\section{Fast Quality-aware Edge Selection} \label{pro edge selection}
In this section, we first introduce two types of edge sampling strategies, which are variance-reduced sampling strategy and gradient noise-reduced sampling strategy. Then, we introduce the two-step edge sampling method that optimizes the efficiency of edge selection over large graphs.

\subsection{Variance-minimized Sampling Strategy}

\textbf{The Variance of Aggregated Embedding}.
Since the edges are probability selected in \framework, the spanning subgraph can be treated as a sampled subgraph from the original graph, and the variance of aggregated embeddings in \framework affects the model accuracy and should not be ignored. Similar to the existing works~\cite{graphsaint}, the unbiased estimator of aggregated embeddings without activation and the variance of embeddings estimator can be defined as:
\begin{equation}
    \xi = \sum_{(l)} \sum_{e} \frac{b^{(l)}_{e}}{p_{e}} \mathbbm{1}_{e}^{(l)}
\end{equation}
\begin{equation}
    Var(\xi) = \sum_{e} \frac{(\sum_{l} b_{e}^{(l)})^{2}}{p_{e}} - \sum_{e}(\sum_{l}b_{e}^{(l)})^{2},
\end{equation}
where $p_{e}$ denotes the probability of an edge to be sampled, $b_{e}^{(l)}=P_{v,u}\tilde{x}^{(l-1)}_{u} + P_{u,v}\tilde{x}^{(l-1)}_{v}$, $P$ is the propagation matrix (e.g., the normalized adjacency matrix in GCN), $\tilde{x}$ is feature matrix after linear operation, and $\mathbbm{1}_{e}=1$ if $e$ is in $E_{s}$.

\textbf{Variance Minimization.} To minimize the variance of the aggregated embeddings estimator, we follow the strategy used in GraphSAINT~\cite{graphsaint}. By using the Cauchy-Schwarz inequality, the variance of aggregated embeddings is minimized when $p_{e} \propto |\sum_{l}b_{e}^{(l)}|$, which can be simplified as:
\begin{equation}
    p_{e} \propto P_{v,u} + P_{u,v} = \frac{1}{deg(u)} + \frac{1}{deg(v)}. \label{prob:var}
\end{equation}

The probability $p_{e}$ defined in E.q.~\ref{prob:var} interprets that if two nodes $u, v$ are connected and they have few neighbors, then edge between $u$ and $v$ are more likely to be sampled and to reduce the variance. In other words, such edges will contain more information for the nodes, which is more conducive to the training of the node.

\subsection{Gradient Noise-reduced Sampling Strategy}

\textbf{The Noise of Gradient.} As mentioned before, with the spanning subgraphs, the final learned embeddings change compared to the exact ones. Therefore, the results of loss function and gradient change as well. We define the noise of gradient as the change between $\nabla_{W^{(l-1)}}L$s that is calculated by original graph training and spanning subgraph training. We formulate the noise of gradient as below:
\begin{equation}
    G_{noise} = \nabla_{W^{(l-1)}}\tilde{L} - \nabla_{W^{(l-1)}}L.
    \label{G_noise}
\end{equation}

\textbf{Gradient Noise Reduction.} The noise of gradient slows down the convergence and affects the model accuracy. To solve the problem, we derive a probability distribution for edge sampling that can reduce the upper bound of gradient noise. The probability of an edge $e(u,v)$ is formulated as:
\begin{equation}
    p_{v,u}=\frac{\left\| P_{*,u} \right\|_{2}}{\sum_{(v,u)\in E}\left\| P_{*,u} \right\|_{2}},
    \label{final_prob}
\end{equation}
where $P_{*,u}$ denotes the vector of $node_{u}$'s propagation matrix.. The larger the sampling probability of the edge, the smaller the gradient noise in the training process.

Next, we theoretically analyze the above probability and dig into the upper bound of the expected gradient noise, which is summarized in Theorem~\ref{th1}.

\begin{theorem}\label{th1}
\textbf{Upper bound of the expected gradient noise.} Given the square of Frobenius norm $\left\| P\right\|^{2}_{F}$, $\left\| H^{(l)}\right\|^{2}_{F}$, $\left\| \delta^{(l)}\right\|^{2}_{F}$ are bounded by some constants $B$, $C$, $D$ and the L2-norm $\left\| H^{(l)}W^{(l)} \right\|$ is bounded by constant $\xi$. Assume that the activation function $\sigma$ is $\rho_{\sigma}-Lipschitz$ and the gradient $\nabla_{Z^{(l)}}L$ is $\rho_{Z}-Lipschitz$, then we have:
\begin{small} 
\begin{equation} 
    E\left[ \left\| G_{noise} \right\|^{2}_{F} \right] \leq (2BD\rho_{\sigma} + 4BC\rho_{Z})E\left[ \left\| \tilde{Z}^{(l)} - Z^{(l)} \right\|_{F}^{2} \right] + 4BCD.
    \label{eq:noisebound}
\end{equation} 
\end{small}
\end{theorem}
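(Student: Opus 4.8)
The starting point is the closed forms of the two gradients, $\nabla_{W^{(l-1)}}L=(H^{(l-1)})^{T}P^{T}\delta^{(l)}$ from E.q.~\ref{GNN3} and $\nabla_{W^{(l-1)}}\tilde{L}=(\tilde{H}^{(l-1)})^{T}\tilde{P}^{T}\tilde{\delta}^{(l)}$ from E.q.~\ref{GNNRE3}. Substituting these into the definition E.q.~\ref{G_noise} of $G_{noise}$, I would expand the difference as a telescoping sum that isolates the three sources of approximation error --- the perturbed embedding $\tilde{H}^{(l-1)}-H^{(l-1)}$, the perturbed propagation matrix $\tilde{P}-P$, and the perturbed embedding gradient $\tilde{\delta}^{(l)}-\delta^{(l)}$:
\begin{equation}
G_{noise}=(\tilde{H}^{(l-1)})^{T}P^{T}(\tilde{\delta}^{(l)}-\delta^{(l)})+(\tilde{H}^{(l-1)}-H^{(l-1)})^{T}P^{T}\delta^{(l)}+(\tilde{H}^{(l-1)})^{T}(\tilde{P}-P)^{T}\tilde{\delta}^{(l)}.
\end{equation}

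\noindent Next I would take Frobenius norms, apply the triangle inequality followed by a power-mean bound of the type $(a+b+c)^{2}\le 3(a^{2}+b^{2}+c^{2})$ (or a weighted variant tuned to the stated constants), and then use submultiplicativity, $\|XYZ\|_{F}\le\|X\|_{F}\|Y\|_{F}\|Z\|_{F}$, to split each of the three terms into a product of the norms of its factors. This reduces the task to bounding $\|\tilde{H}^{(l-1)}\|_{F}$, $\|\tilde{H}^{(l-1)}-H^{(l-1)}\|_{F}$, $\|P\|_{F}$, $\|\tilde{P}-P\|_{F}$, $\|\delta^{(l)}\|_{F}$ and $\|\tilde{\delta}^{(l)}-\delta^{(l)}\|_{F}$ individually.

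\noindent For these factors the uniform bounds give $\|P\|^{2}_{F}\le B$, $\|\tilde{H}^{(l-1)}\|^{2}_{F}\le C$ and $\|\delta^{(l)}\|^{2}_{F}\le D$ (the tilde quantities obeying the same bounds, or controlled by the exact quantity plus its error). The Lipschitz hypotheses convert the two genuine error factors into multiples of the embedding error: $\|\tilde{H}^{(l-1)}-H^{(l-1)}\|_{F}=\|\sigma(\tilde{Z}^{(l-1)})-\sigma(Z^{(l-1)})\|_{F}\le\rho_{\sigma}\|\tilde{Z}^{(l-1)}-Z^{(l-1)}\|_{F}$ and $\|\tilde{\delta}^{(l)}-\delta^{(l)}\|_{F}\le\rho_{Z}\|\tilde{Z}^{(l)}-Z^{(l)}\|_{F}$. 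A short layer-index reconciliation --- moving the layer-$(l-1)$ embedding error up to layer $l$ via the forward recursion $\tilde{Z}^{(l)}-Z^{(l)}=\tilde{P}(\tilde{H}^{(l-1)}-H^{(l-1)})W^{(l-1)}+(\tilde{P}-P)H^{(l-1)}W^{(l-1)}$ together with $\|H^{(l-1)}W^{(l-1)}\|\le\xi$ --- then expresses everything through $\|\tilde{Z}^{(l)}-Z^{(l)}\|_{F}$, which assembles into the coefficient $(2BD\rho_{\sigma}+4BC\rho_{Z})$ of $E[\|\tilde{Z}^{(l)}-Z^{(l)}\|^{2}_{F}]$. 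The propagation-error term $(\tilde{H}^{(l-1)})^{T}(\tilde{P}-P)^{T}\tilde{\delta}^{(l)}$ is the single factor that cannot be tied to $\tilde{Z}^{(l)}-Z^{(l)}$, so I would bound it crudely through $\|\tilde{P}-P\|^{2}_{F}\le 2\|\tilde{P}\|^{2}_{F}+2\|P\|^{2}_{F}$ and the $C$, $D$ bounds on its companions; this produces the additive residual $4BCD$. Taking expectation over the random edge sampling (which generates $\tilde{P}$, hence $\tilde{Z}^{(l)}$, $\tilde{H}^{(l-1)}$ and $\tilde{\delta}^{(l)}$) then yields E.q.~\ref{eq:noisebound}.

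\noindent I expect the main obstacle to be exactly this propagation-matrix term and the bookkeeping around it: $\tilde{P}$ is a random matrix differing from $P$ in \emph{structure}, not merely in magnitude, so it cannot be folded into the $\tilde{Z}^{(l)}-Z^{(l)}$ term and must be paid for with a standing constant; and making the final coefficients collapse to precisely $2BD\rho_{\sigma}$, $4BC\rho_{Z}$ and $4BCD$ --- rather than a messier expression --- hinges on choosing the splitting constants in the power-mean / AM--GM steps and the exact-versus-approximate substitutions consistently, and on the layer-index reconciliation absorbing into the same residual. The remaining ingredients --- submultiplicativity, the two Lipschitz estimates, and the uniform norm bounds --- are routine.
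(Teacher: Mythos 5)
Your proposal is correct and follows essentially the same route as the paper: telescope the difference of the triple products $\tilde{H}^{T}\tilde{P}^{T}\tilde{\delta}-H^{T}P^{T}\delta$ into an embedding-error term, a gradient-error term, and a propagation-matrix term, bound each by submultiplicativity and the norm/Lipschitz hypotheses, and absorb the $\tilde{P}-P$ term into the additive constant $4BCD$. The only difference is bookkeeping: the paper uses a nested two-stage split (first two terms with $(a+b)^2\le 2a^2+2b^2$, then splitting the second term again), which is exactly what produces the coefficients $2$, $4$, $4$ rather than the $3$, $3$, $3$ of a flat three-way split — the weighted variant you anticipated — and it avoids your layer-index reconciliation step by bounding $\tilde{H}^{(l)}-H^{(l)}=\sigma(\tilde{Z}^{(l)})-\sigma(Z^{(l)})$ directly at layer $l$.
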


\begin{proof}
According to the definition of gradient noise, we can derive the Equation~\ref{eq:noisebound} as follows:
    \begin{equation}
        \begin{split}
        & E\left[ \left\| G_{noise} \right\|^{2}_{F} \right] =  E\left[ \left\| \nabla_{W^{(l)}}\tilde{L} - \nabla_{W^{(l)}}L \right\|^{2}_{F} \right] \\
        & = E\left[ \left\| (\tilde{H}^{(l)})^{T}\tilde{P}^{T}\tilde{\delta}^{(l+1)} - (H^{(l)})^{T}P^{T}\delta^{(l+1)}\right\|^{2}_{F} \right] \\
        & = E\left[ \left\| (\tilde{H}^{(l)}-H^{l})^{T}\tilde{P}^{T}\tilde{\delta}^{(l+1)} - (H^{(l)})^{T}(P^{T}\delta^{(l+1)}-\tilde{P}^{T}\tilde{\delta}^{(l+1)})\right\|^{2}_{F} \right] \\
        & \leq 2E\left[ \left\| (\tilde{H}^{(l)}-H^{l})^{T}\tilde{P}^{T}\tilde{\delta}^{(l+1)} \right\|_{F}^{2} \right] + 2E\left[ \left\| (H^{(l)})^{T}(P^{T}\delta^{(l+1)}-\tilde{P}^{T}\tilde{\delta}^{(l+1)})\right\|^{2}_{F} \right] \\
        & \leq 2E\left[ \left\| (\tilde{H}^{(l)}-H^{l})^{T}\tilde{P}^{T}\tilde{\delta}^{(l+1)} \right\|_{F}^{2} \right] + \\
        & 4E\left[ \left\| (H^{(l)})^{T}P^{T}(\delta^{(l+1)}-\tilde{\delta}^{(l+1)}) \right\|_{F}^{2} \right] + 4E\left[ \left\| (H^{(l)})^{T}(\tilde{P}^{T}-P^{T})(\tilde{\delta}^{(l+1)}) \right\|_{F}^{2} \right] \\
        & \leq 2BD\rho_{\sigma}E\left[ \left\| \tilde{Z}^{(l)} - Z^{(l)} \right\|_{F}^{2} \right] + 4BC\rho_{Z}E\left[ \left\| \tilde{Z}^{(l)} - Z^{(l)} \right\|_{F}^{2} \right] + 4BCD \\
        & = (2BD\rho_{\sigma} + 4BC\rho_{Z}E\left[ \left\| \tilde{Z}^{(l)} - Z^{(l)} \right\|_{F}^{2} \right]) + 4BCD
        \end{split}
        \label{first th}
    \end{equation}
\end{proof}

According to E.q.~\ref{eq:noisebound}, the upper bound of the expected gradient noise is decided by $\left\| \tilde{Z}^{(l)} - Z^{(l)} \right\|_{F}^{2}$, i.e., the expected value of the difference of the hidden layer embedding. We further analyze the upper bound of of this difference. 

\begin{theorem} \label{th2}
    \textbf{Upper bound of the expected hidden embeddings' difference.} Given the entire edge set $E$ and the selected edge subset $E_{s}$, we derive the following inequation:
    
    \begin{equation}
        E_{E_{s}}\left[ \left\| \tilde{Z}_{V,*} - Z_{V,*} \right\|^{2}_{F} \right] \leq \frac{1}{|E_{s}|} \sum_{(v,u)\in E} \frac{1}{p_{(v,u)}} \left\| P_{*,u}\right\|^{2}_{2}\xi^{2}.
        \label{final_min}
    \end{equation}
\end{theorem}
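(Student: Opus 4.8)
The plan is to express the difference $\tilde{Z}^{(l)}_{V,*} - Z^{(l)}_{V,*}$ as a sum over the sampled edges, recognize it as an unbiased (or appropriately scaled) estimator, and then bound its expected squared Frobenius norm by a variance-type computation. Concretely, $Z^{(l)}_{v,*} = \sum_{u} P_{v,u} \tilde{x}^{(l-1)}_{u}$ where $\tilde{x}^{(l-1)}_{u}$ is the $u$-th row of $H^{(l-1)}W^{(l-1)}$, and $\tilde{Z}^{(l)}_{v,*}$ is the same sum restricted to edges $(v,u)\in E_s$, with the importance-sampling reweighting $1/(|E_s| p_{(v,u)})$ implicit in how the spanning subgraph estimator is formed (this is the same device used for $\xi$ in the variance-minimized section). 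So $\tilde{Z}^{(l)}_{V,*} - Z^{(l)}_{V,*}$ is a centered sum of $|E_s|$ i.i.d.\ edge draws, and $E_{E_s}\!\left[\|\tilde{Z}_{V,*} - Z_{V,*}\|_F^2\right]$ equals $\tfrac{1}{|E_s|}$ times the per-draw second moment.

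The key steps, in order, are: (1) write out $\tilde{Z}^{(l)}_{V,*} - Z^{(l)}_{V,*} = \tfrac{1}{|E_s|}\sum_{k=1}^{|E_s|} Y_k$ where each $Y_k$ is the centered single-edge contribution $\tfrac{1}{p_{(v,u)}} P_{*,u}(\tilde{x}^{(l-1)}_u)^{T}$-type term minus its mean; (2) use independence of the $Y_k$ to collapse the expected squared norm to $\tfrac{1}{|E_s|}\,E\big[\|Y_1\|_F^2\big]$, discarding the subtracted mean since $\|A - E[A]\|^2 \le \|A\|^2$ in expectation; (3) expand $E\big[\|Y_1\|_F^2\big] = \sum_{(v,u)\in E} p_{(v,u)} \cdot \tfrac{1}{p_{(v,u)}^2}\big\| P_{*,u}(\tilde{x}^{(l-1)}_u)^{T}\big\|_F^2 = \sum_{(v,u)\in E} \tfrac{1}{p_{(v,u)}}\|P_{*,u}\|_2^2 \|\tilde{x}^{(l-1)}_u\|_2^2$, using $\|ab^{T}\|_F = \|a\|_2\|b\|_2$; and (4) apply the hypothesis $\|H^{(l-1)}W^{(l-1)}\| \le \xi$ to replace $\|\tilde{x}^{(l-1)}_u\|_2^2$ by $\xi^2$, yielding exactly the right-hand side of~\eqref{final_min}.

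I expect the main obstacle to be step~(1)--(2): pinning down the precise form of the spanning-subgraph estimator so that the centering and the $1/|E_s|$ scaling are justified rigorously. In particular, whether the $|E_s|$ edges are sampled with replacement (clean i.i.d., so the variance factorizes as $\tfrac{1}{|E_s|}$ exactly) or without replacement (requiring a slightly more careful argument, or the standard bound that without-replacement variance is dominated by the with-replacement one) affects how clean step~(2) is. I would handle this by adopting the with-replacement / independent-draw model consistent with the estimator $\xi$ introduced earlier, so that $E\big[\|\sum_k Y_k\|_F^2\big] = \sum_k E\big[\|Y_k\|_F^2\big]$ holds by orthogonality of the centered terms, and the inequality $E\|\tilde Z - Z\|_F^2 \le \tfrac{1}{|E_s|}\sum_{(v,u)\in E}\tfrac{1}{p_{(v,u)}}\|P_{*,u}\|_2^2\,\xi^2$ follows from dropping the nonpositive $-\|E[Y_1]\|_F^2$ correction term. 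Everything after that is the routine norm factorization and substitution in steps~(3)--(4).
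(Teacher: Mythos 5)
Your proposal is correct and follows essentially the same route as the paper: both write $\tilde{Z}$ as the importance-sampling estimator $\frac{1}{|E_s|}\sum \frac{1}{p_{(v,u)}}(\cdot)$, use independence of the draws to collapse the expected squared norm to $\frac{1}{|E_s|}$ times a per-edge second moment, drop the nonpositive centering term, and bound $\|H_{u,*}W\|_2$ by $\xi$ to arrive at $\frac{1}{|E_s|}\sum_{(v,u)\in E}\frac{1}{p_{(v,u)}}\|P_{*,u}\|_2^2\xi^2$. The only cosmetic difference is that the paper decomposes row-by-row over $v\in V$ before summing, whereas you work with the rank-one outer products directly; the computations coincide.
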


\begin{proof}
First, we reformulate $\tilde{Z}$ as following:
    \begin{equation}
        \tilde{Z}_{V,*} = P_{E_{s}}H_{V,*}W = P_{E}RH_{V,*}W, \label{GNN_R}
    \end{equation}
where $(P_{E}R)_{ij} = \frac{1}{|E_{s}|p_{(i,j)}}({P_{E}})_{(i,j)}$ if $(i,j) \in E_{s}$, otherwise $(P_{E}R)_{ij} = 0$; and $p_{(i,j)}$ is the probability of edge $(i,j)$ to be selected. 

Then $E\left[ P_{E}RHW \right] = P_{E}HW$, and we can derive Equation~\ref{final_min} as follows:
        \begin{equation}
            \begin{split}
            & E_{E_{s}}\left[ \left\| \tilde{Z}_{V,*} - Z_{V,*} \right\|^{2}_{F} \right]
            = E_{E_{s}}\left[ \left\| P_{E}RH_{V,*}W - P_{E}H_{V,*}W \right\|^{2}_{F} \right] \\
            & = \sum_{v \in V} E_{E_{s}}\left[ \left\| \frac{1}{|E_{s}|}\sum_{(v,u)\in E_{s}}\frac{1}{p_{(v,u)}}P_{v,u}H_{u,*}W - \sum_{(v,w)\in E} P_{v,w}H_{w,*}W\right\|^{2}_{2} \right] \\
            & = \frac{1}{|E_{s}|} \sum_{v \in V} E_{E_{s}}\left[ \left\| \frac{1}{p_{(v,u)}}P_{v,u}H_{u,*}W - \sum_{(v,w)\in E} P_{v,w}H_{w,*}W\right\|^{2}_{2} \right] \\ 
            & = \frac{1}{|E_{s}|} \sum_{v \in V} \sum_{(v,u)\in E}p_{(v,u)}  \left\| \frac{1}{p_{(v,u)}}P_{v,u}H_{u,*}W - \sum_{(v,w)\in E} P_{v,w}H_{w,*}W\right\|^{2}_{2}  \\ 
            & = \frac{1}{|E_{s}|} \sum_{v \in V} \left[ \sum_{(v,u)\in E} \frac{1}{p_{(v,u)}} \left\| P_{v,u}H_{u,*}W\right\|^{2}_{2} - \left\| \sum_{(v,w)\in E} P_{v,w}H_{w,*}W\right\|^{2}_{2}\right]\\
            & \leq \frac{1}{|E_{s}|} \sum_{v \in V} \sum_{(v,u)\in E} \frac{1}{p_{(v,u)}} \left\| P_{v,u}H_{u,*}W\right\|^{2}_{2} \\
            & = \frac{1}{|E_{s}|} \sum_{v \in V} \sum_{(v,u)\in E} \frac{1}{p_{(v,u)}} \left\| P_{v,u}\right\|^{2}_{2} \xi^{2} \\
            & = \frac{1}{|E_{s}|} \sum_{(v,u)\in E} \frac{1}{p_{(v,u)}} \left\| P_{*,u}\right\|^{2}_{2} \xi^{2}
            \end{split}
            \label{second th}
        \end{equation}
\end{proof}

As illustrated in E.q.~\ref{final_min}, we find the upper bound of hidden embeddings' difference is related to edge sampling probability $p_{v,u}$. By combining Equation\ref{eq:noisebound} and removing the constants that are hard to calculate, we can formulate a gradient noise optimization problem and minimize the value of the noise by using the following constraint:

\begin{equation}
    s.t. \sum_{(v,u)\in E} p_{(v,u)} = 1 \label{conditoin}
\end{equation}

Based on E.q.~\ref{final_min} and E.q.~\ref{conditoin}, by using the Lagrange function, we derive the edge sampling probability as defined in E.q.~\ref{final_prob}, and the probability can reduce the gradient noise in the spanning subgraph training.

\subsection{Two-step Edge Sampling Method}\label{sec:two-step-sampling}
The probabilities defined by E.q.~\ref{prob:var} and~\ref{final_prob} are non-uniform. It is a challenge to fast sample non-uniform distribution in large sample space~\cite{non_uniform_sample}. An efficient sampling method, Alias sampling~\cite{walker1977efficient}, requires massive memory and entails the high cost of building data structures. In this paper, we propose a simple but effective approximate sampling method -- two-step edge sampling to speed up the quality-aware edge selection process. 

In the first step, \framework reduces the sample space by randomly sampling an edge set, denoted as $e_{t}$, in iteration $T_{t}$. This step confines the final selected edges focusing on the edge set $e_{t}$ rather than the entire edge set $E$. In the second step, \framework samples $e_{t}^{'}$ from the edge set $e_{t}$ according to the probability defined by E.q.~\ref{prob:var} and~\ref{final_prob}. 

Algorithm~\ref{alg:sampling} illustrates the procedures of quality-aware edge selection with two-step sampling. $RandomSample$ is the function that randomly selects a fixed number of edges from the input graph (Line 1). $vm$ and $gnr$ respectively represents the variance-minimized sampling strategy and the gradient noise-reduced sampling strategy (Line 2,5). $ProbabilitySample$ is the function that selects a fixed number of edges from the input graph with certain edge sampling probability (Lines 3,6).

\begin{algorithm}[ht]
\caption{ Quality-aware Edge Selection with Two-step Sampling.}
\label{alg:sampling}
\begin{algorithmic}[1] 
    \REQUIRE The graph $G$, first-step sampling size $S_{1}$, second-step sampling size $S_{2}$, the type of the edge sampling probability $prob$;
    \ENSURE Edges $\Delta G$;
    \STATE $G^{'} = RandomSample(G, S_{1})$; $//$ $|G^{'}| = S_{1}$
    \IF{$prob = `vm'$}
        \STATE $G^{''} = ProbabilitySample(G^{'}, S_{2}, `vm')$; $//$ $|G^{''}| = S_{2}$
    \ENDIF
    \IF{$prob = `gnr'$}
        \STATE $G^{''} = ProbabilitySample(G^{'}, S_{2}, `gnr')$; $//$ $|G^{''}| = S_{2}$
    \ENDIF
    \STATE $\Delta G = G^{''}$;
    \RETURN $\Delta G$;
\end{algorithmic}
\end{algorithm}

Here, we discuss the advantages of two-step sampling with a memory-efficient non-uniform sampling method, which first constructs a cumulative probability array, then uses random numbers to select elements. The time complexity entailed by the first step of random sampling from the entire edge set is $O(|e|)$. The time complexity of the second step of weighted sampling is about $O(|e|+|e^{'}|log(|e|))$. Therefore, the total time complexity of the two-step sampling is $O(|e|)+O(|e^{'}|log(|e|))$. However, if we directly sample $|e^{'}|$ edges from entire edge set, the time complexity is $O(|E|+|e^{'}|log(|E|))$. In practice, $|e|$ is typically several to ten times $|e^{'}|$, while $|E|$ can be up to a hundred times larger than $|e|$. Therefore, the time cost of the two-step sampling is lower than that of direct sampling.


\section{Connection to Curriculum Learning} \label{cl}
In this section, we analyze the connection between \framework and curriculum learning.
To our knowledge, curriculum learning increases the robustness of the learned model against noisy training samples by training samples from easy to hard. An intuitive explanation is that curriculum learning spends less time with the harder (noisy) samples to achieve better robustness. 

\framework incorporates the principles of curriculum learning by constructing different graph structures (i.e., spanning subgraphs) during the learning process. \framework not only mirrors the educational strategy of progressing from easy to hard lessons, but also aligns with the model's need to first grasp fundamental concepts before tackling more challenging tasks. The detailed discussion is as follows.

First, in \framework, the empty graph at the beginning can be regarded as the simplest `course'. In the training process, edges are gradually added to the graph. This progressive learning process helps the model master basic structural information first, and then learn more complex graph structures, which helps the model to learn more robust and avoid overfitting. 

Second, through the Quality-aware Edge Selection, we prioritize edges that are more significant for model training, to help minimize feature variance and reduce gradient noise. Edges with smaller feature variance mean that the aggregated features are more consistent. Also, edges with less gradient noise mean that they can help the model learn more stable. This is similar to the `from easy to hard' in curriculum learning, where we initially learn the data that will be more beneficial for subsequent learning. 

\section{Experimental studies}
In this section, we start with the descriptions of experimental settings, which cover the datasets and configurations used in the experiments. Then, we evaluate the performance of \framework by comparing it with full-graph training methods, conduct ablation studies to verify the effectiveness of the proposed techniques, and study the efficiency of \framework. Finally, we also compare \framework with mini-batch training methods to demonstrate that generally \framework is able to achieve high accuracy. In addition, due to the limited space, we put the results of parameter sensitivity in Section 4 of the Supplementary Material. 

\subsection{Experimental Setups} \label{Experimental setup}
\begin{table}[htbp]
\caption{Dataset statistics}
\begin{center}
\begin{tabular}{|c|c|c|c|c|}
\hline
\textbf{Dataset}&\multicolumn{4}{|c|}{\textbf{Dataset attributions}} \\
\cline{2-5} 
\textbf{name} & \textbf{\textit{\#Nodes}} & \textbf{\textit{\#Edges}} & \textbf{\textit{Features}} & \textbf{\textit{Classes}} \\
\hline
Ogbn-proteins & 132,534 & 79,122,504 & 8 & 112 \\
Reddit & 232,965 & 114,615,892 & 602 & 41 \\
Amazon & 1,598,960 & 264,339,468 & 200 & 107  \\
Ogbn-products & 2,449,029 & 126,167,053 & 100 & 47 \\
\hline
\end{tabular}
\label{tab1}
\end{center}
\end{table}

\setcounter{footnote}{0}

\textbf{Environments \& Datasets.} We implemented \framework with PyTorch 2.0.1, and the code is released\footnote{https://github.com/guxizhi/SpanGNN}. 
We evaluate the performance of \framework using two common GNN models including GCN~\cite{kipf2016semi} and SAGE~\cite{hamilton2017inductive} with the mean aggregator.
All experiments are conducted on NVIDIA RTX A6000. We use four large graph datasets. Table \ref{tab1} lists the summary of the datasets. 

\textbf{Performance Metrics \& Evaluation Protocol}. Accuracy is used to measure the effectiveness of \framework on Reddit and Ogbn-products datasets, F1-score is used on Amazon, and AUC-ROC is used on Ogbn-proteins. 
All performance metrics are calculated on the validation set and the results are the average of three times experiments. Furthermore, we conduct experiments under different edge ratios to verify the memory-efficiency of \framework. 

\textbf{Baselines.} 
1) \textbf{Full-graph}. It is a naive full-graph training method, but consumes heavy GPU memory. 
2) \textbf{DropEdge}. It has good scalability for training on large graphs.
3) \textbf{GraphSAGE}, \textbf{ClusterGCN} and \textbf{GraphSAINT} are selected as the representations of the mini-batch training.

Additionally, \framework equipped with variance-minimized sampling and gradient noise-reduced sampling respectively are denoted by \textbf{\framework-F} and \textbf{\framework-G}. 

\begin{figure}[!t]
    \centering
    \includegraphics[width=\textwidth]{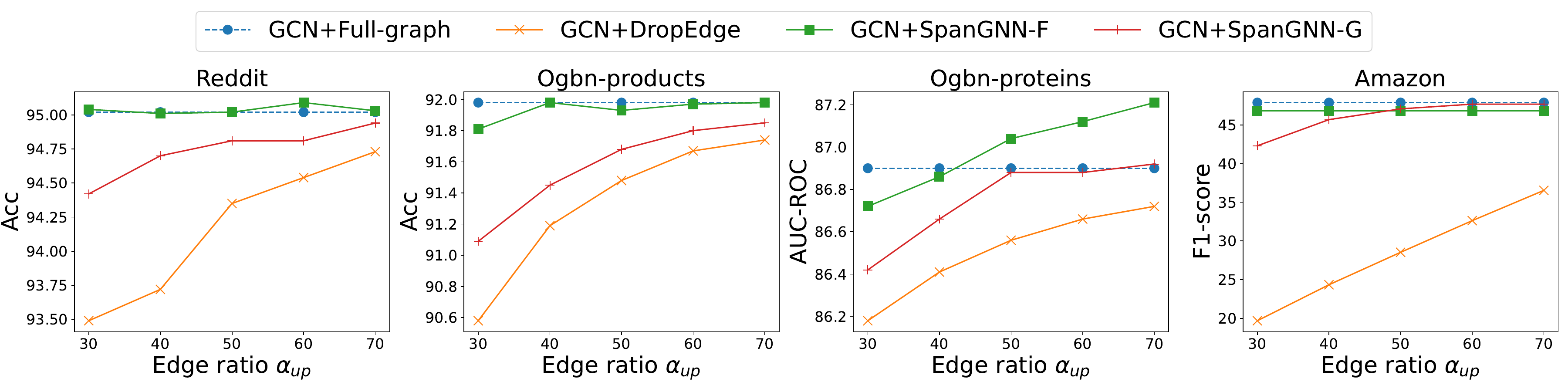}
    \includegraphics[width=\textwidth]{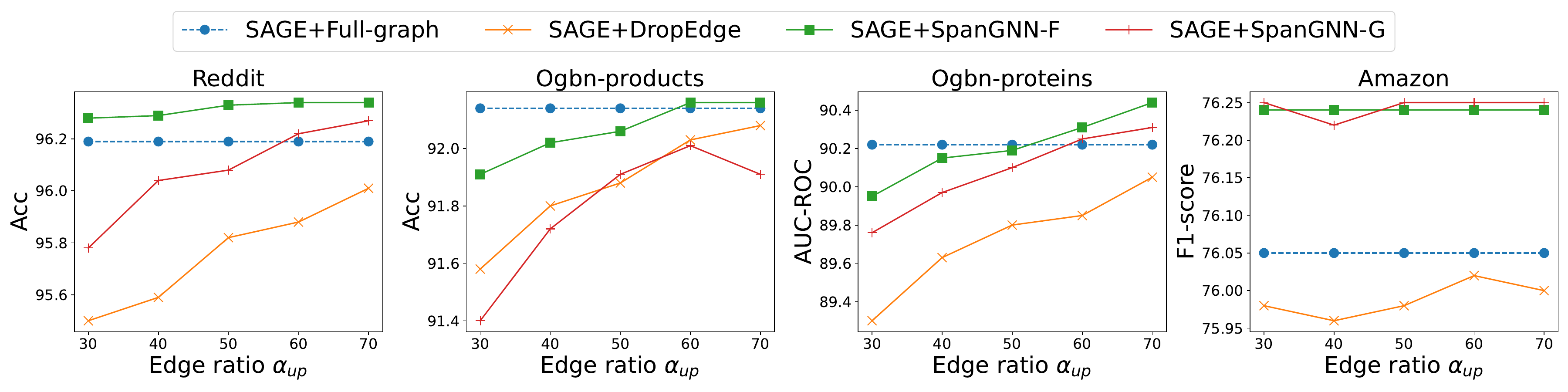}
    \caption{The performance of the training methods on GCN (Up-side) and SAGE (Down-side) with various edge ratios.}
    \label{gcn_sage_fd}
    \vspace{-1em}
\end{figure}

\subsection{Performance of \framework } \label{efficiency exp}
\subsubsection{Comparison of model accuracy.}
Figure~\ref{gcn_sage_fd} illustrates the model accuracy of \framework, Full-graph and DropEdge on GCN and SAGE with various edge ratios $\alpha_{up}$, which are set from 0.3 to 0.7. We see that \framework-F's performance is similar to or even better than Full-graph. For example, on Reddit, the accuracy of \framework-F is higher than the one of Full-graph with SAGE, regardless of $\alpha_{up}$. On Ogbn-proteins, as $\alpha_{up}$ gets larger, the AUC-ROC of \framework-F gradually exceeds the one of Full-graph. The exception is on Amazon, where \framework-G is better than \framework-F as $\alpha_{up}$ gets larger. This is because \framework-F's sampling probability on Amazon is extremely skewed, and it is caused by the fact that few edges are connected by two low-degree nodes. These minority edges are given larger weight during selection, causing them to be selected repeatedly in every edge selection. It is hard to obtain enough edges for the spanning subgraph (i.e., the edge ratio of a spanning subgraph is hard to reach $\alpha_{up}$) and results in a decrease of F1-score. This problem also reduces the size of peak memory usage. In Figure~\ref{gcn_sage_memory}, we see that the peak memory usage of \framework-F is stable with respect to different edge ratios on Amazon.

Compared to \framework, DropEdge suffers from the decrease in model performance more seriously. On Reddit, DropEdge losses the accuracy by up to 1.5\% on GCN and by up to 0.8\% on SAGE. Even worse, DropEdge severely damages the model's F1-score on Amazon by more than 25\%. Overall, \framework is better at ensuring model's performance compared to DropEdge.

\begin{figure}[!t]
    \centering
    \includegraphics[width=\textwidth]{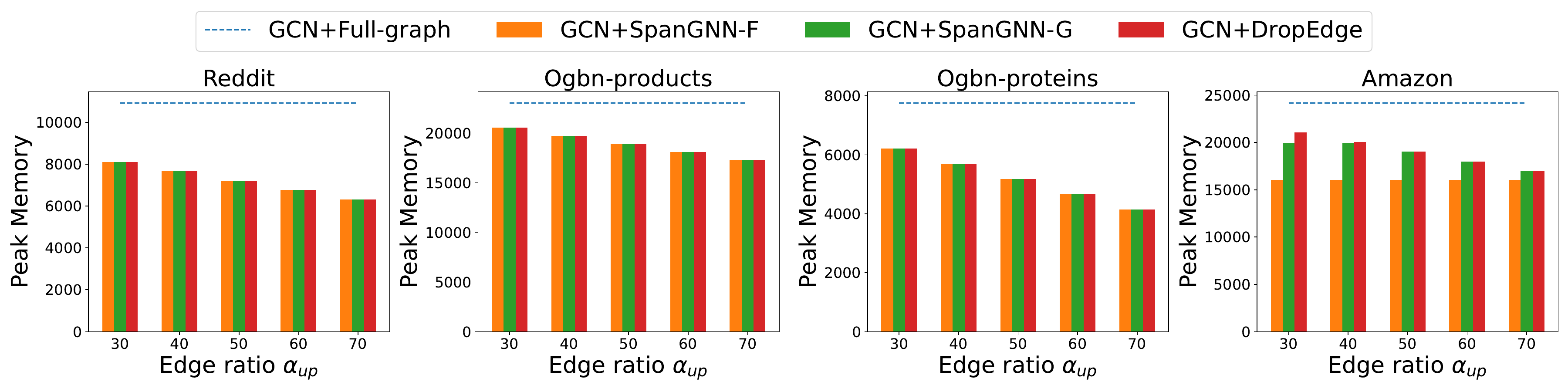}
    \includegraphics[width=\textwidth]{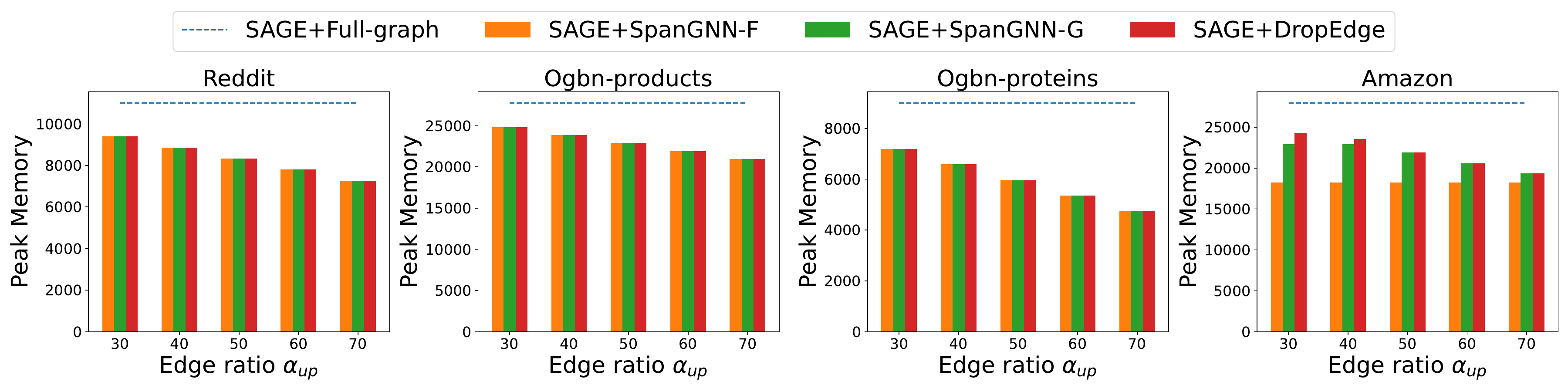}
    \caption{Peak Memory Usage on GCN(Up-side) and SAGE(Down-side).}
    \label{gcn_sage_memory}
    \vspace{-1em}
\end{figure}

\subsubsection{Comparison of peak memory usage.}
Here we compare the peak memory usage among \framework, Full-graph, DropeEdge. As shown in Figure~\ref{gcn_sage_memory}, we see that reducing the number of edges effectively reduces the size of peak memory by comparing \framework and Full-graph. There is no significant difference between \framework and DropEdge, since they drop the same size of edges. In addition, the percentage of peak memory saved is also independent of the model. By using only 30\% edges, \framework and DropEdge can reduce the peak memory usage by 42\%, 25\% and 47\% on Reddit, Ogbn-products and Ogbn-proteins, respectively. Note that on Amazon, due to the actual edge ratio cannot achieve $\alpha_{up}$, which is discussed in the ``Comparison of model accuracy'', \framework-F has less peak memory overhead. 

\begin{figure}[!t]
    \centering
    \includegraphics[width=\textwidth]{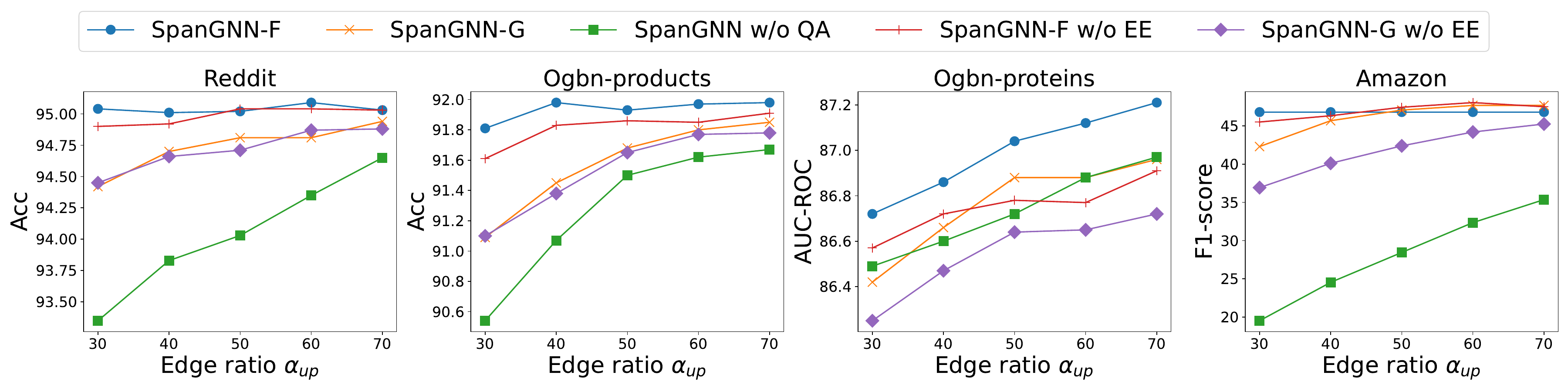}
    \includegraphics[width=\textwidth]{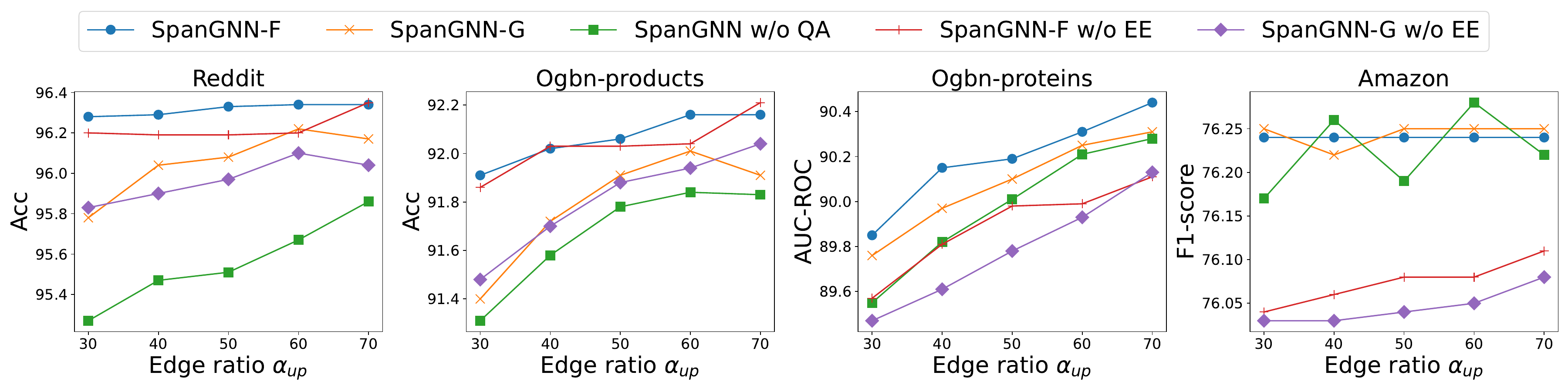}
    \caption{Ablation studies on GCN(Up-side) and SAGE(Down-side)}
    \label{gcn_sage_ab}
    \vspace{-1em}
\end{figure}
\subsection{Ablation Studies} 
\subsubsection{Effectiveness of the framework.}
In order to verify the effectiveness of the principles of curriculum learning used by \framework, we compare the model accuracy between \framework and \framework w/o EE. Instead of the empty graph, \framework w/o EE is initialized by the graph with $\alpha_{up}|G|$ edges, which are selected by quality-aware edge selection. The results are shown in Figure~\ref{gcn_sage_ab}.

The results indicate that \framework improves the model performance on various datasets by adopting the curriculum learning principles. On Ogbn-proteins, it is clear that \framework outperforms \framework w/o EE and the improvement is around 0.2\%. On other datasets, depending on the based model and the value of $\alpha_{up}$, \framework is generally better than or equal to \framework w/o EE.

\subsubsection{Effectiveness of quality-aware edge selection.}
In order to verify the effectiveness of variance-minimized sampling and gradient noise-reduced sampling strategies, we compare the model performance among \framework-G, \framework-F, and \framework w/o QA. Here \framework w/o QA applies random sampling instead of quality-aware sampling. The results are shown in Figure~\ref{gcn_sage_ab}.


Generally, \framework-G and \framework-F have better performance than \framework w/o QA. The advantage can reach 1.5\% on Reddit and even more than 20\% on Amazon. In certain cases, \framework w/o QA might outperform \framework. 
As discussed in the ``Comparison of model accuracy'', on Amazon, the spanning subgraph in \framework is easy to contain fewer edges than the required ones defined by $\alpha_{up}$ because of the skewed sampling probability. However, \framework w/o QA can successfully reach $\alpha_{up}$, and contain sufficient edges. Therefore, \framework does not always perform better than \framework w/o QA on SAGE. Overall, we conclude that the variance-minimized sampling and the gradient noise-reduced sampling generally plays important roles in improving performance of \framework. 

\begin{figure*}[!t]
    \centering
    \subfigure[GCN]{
    \begin{minipage}[t]{0.5\linewidth}
    \centering
    \includegraphics[width=\textwidth]{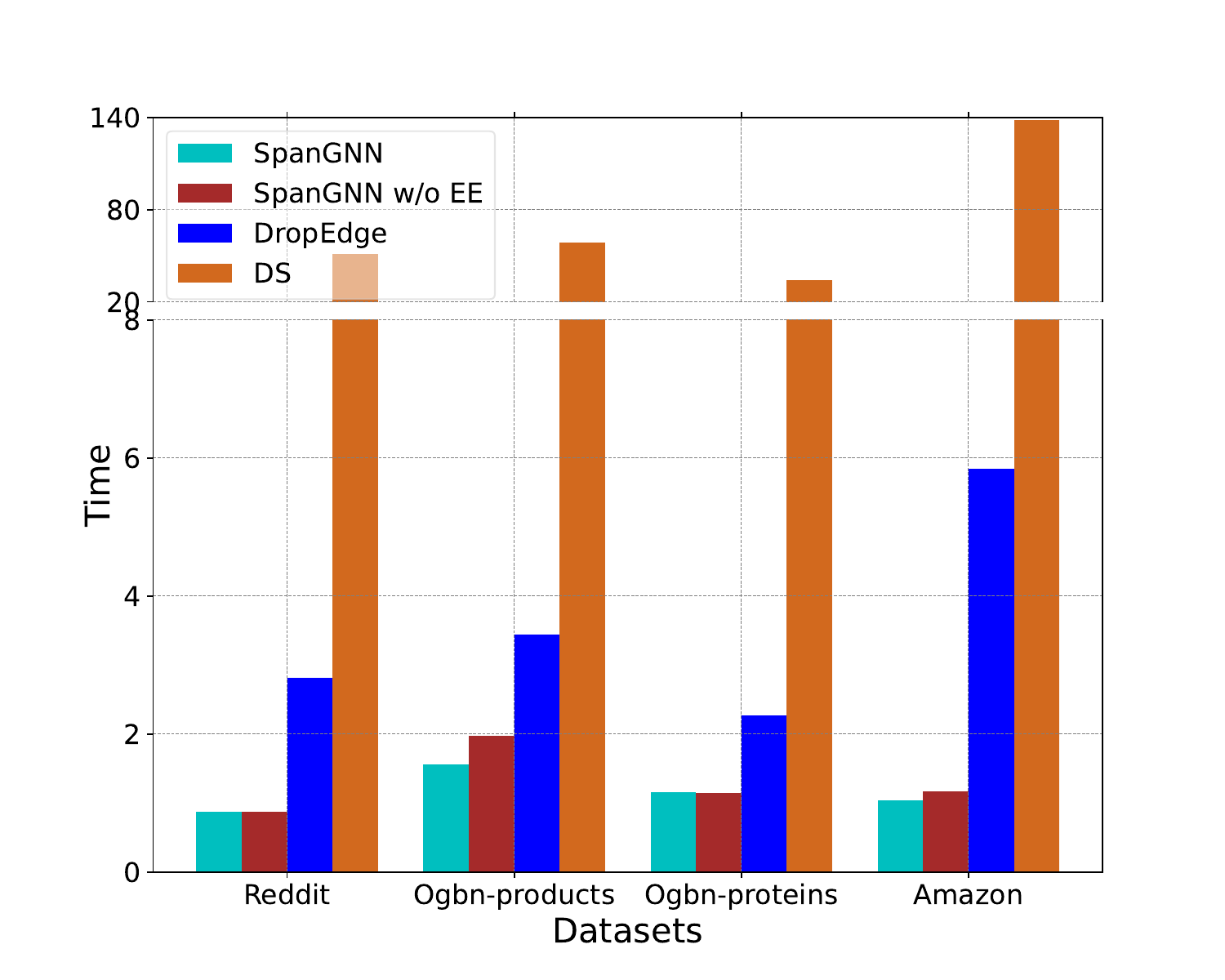}
    \label{time_consume_gcn}
    \end{minipage}%
    }%
    \subfigure[SAGE]{
    \begin{minipage}[t]{0.5\linewidth}
    \centering
    \includegraphics[width=\textwidth]{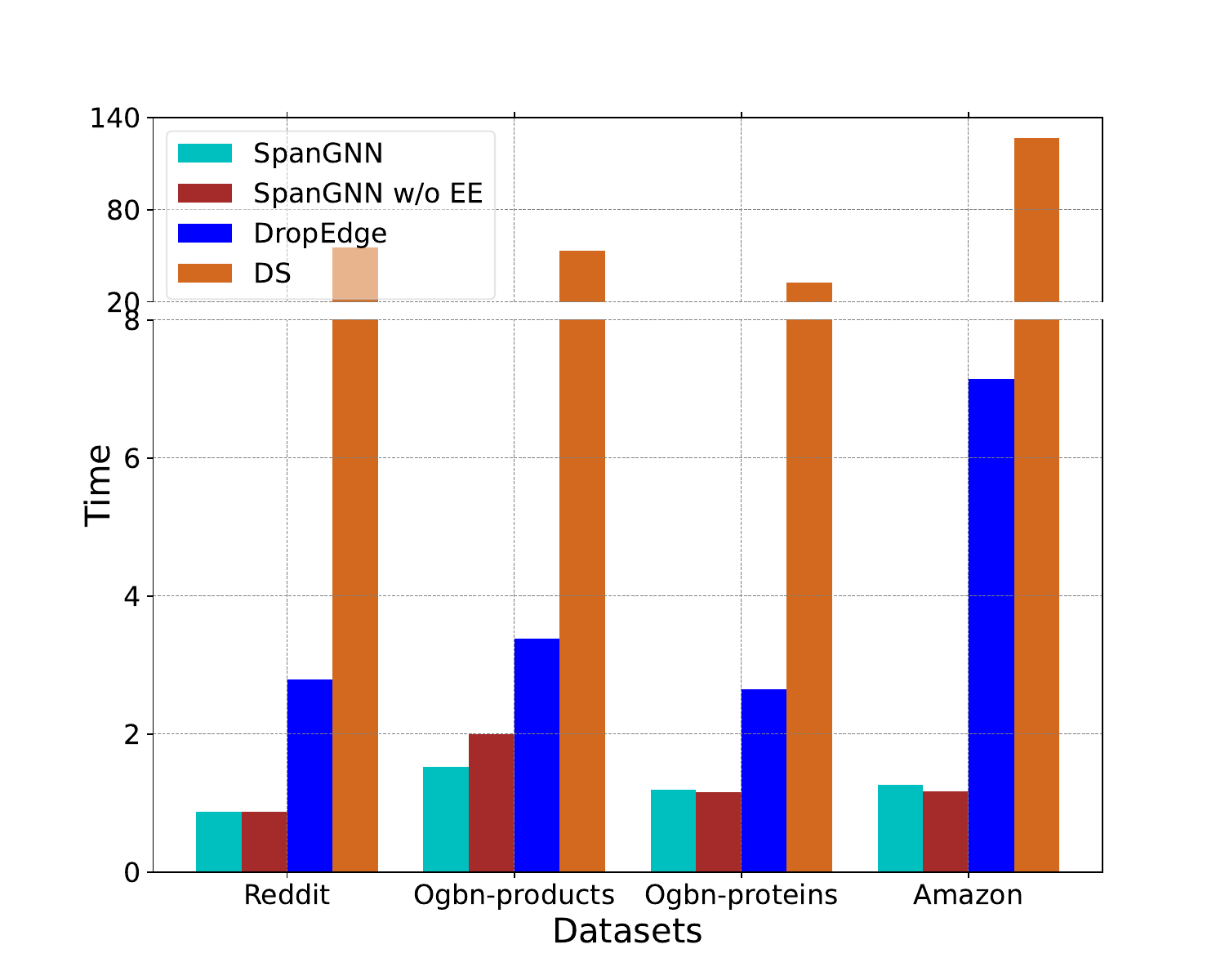}
    \label{time_consume_sage}
    \end{minipage}%
    }%
    \centering
    \caption{The average time cost of generating spanning subgraphs.}
    \label{time_consume}
    \vspace{-1em}
\end{figure*}
\subsection{Efficiency of \framework} \label{efficiency exp}

In this section, we demonstrate the efficiency of \framework by comparing the average time cost of generating spanning subgraphs. Figure~\ref{time_consume} illustrates the results of \framework, \framework w/o EE, DropEdge and direct sampling from the entire graph based on quality-aware edge selection (DS). To guarantee the fairness of the comparison, all methods have the same edge ratio (i.e., $\alpha_{up}=0.3$). 

As we can see, \framework is more efficient than other frameworks, and integrating curriculum learning principle does not destroy the execution efficiency. Specifically, compared to DropEdge, \framework speeds up from 1.95x to 5.65x on different datasets. 
As analyzed in Section~\ref{sec:two-step-sampling}, the time complexity of generating spanning subgraphs in \framework is proportional to the number of first-step sampling edges $e$. Due to dropping a lot of edges each iteration (i.e., $\alpha_{up}=0.3$), DropEdge entails much more time cost than \framework.
When compared to DS, \framework speeds up from 26.99x to 132.08x.  This is because the time complexity of DS is proportional to the number of edges in the entire graphs, which can reach hundreds of times that of $e$. 

\begin{table}[ht]
\caption{The comparison of model performance with mini-batch training methods. Note that \framework's results are determined by taking the best one among different edge ratios.}
\begin{center}
\scalebox{0.9}{
\begin{tabular}{|c|c|c|c|c|c|}
\hline
\cline{3-6}
\textbf{GNN} & \textbf{Model} & \textbf{Reddit} & \textbf{Ogbn-products} & \textbf{Amazon} & \textbf{Ogbn-proteins} \\
\cline{3-6}
& & Acc & Acc & F1-score & AUC-ROC \\
\hline
& \framework-G & 95.26 & \underline{91.50} & \textbf{47.79} & \underline{87.11}\\
& \framework-F & \underline{95.46} & \textbf{91.68} & 46.78 & \textbf{87.19}\\
GCN & GraphSAGE & 91.99 & 90.18 & 28.73 & 71.31\\
& ClusterGCN & 92.05 & 89.94 & \underline{46.86} & 79.30\\
& GraphSAINT & \textbf{96.53} & 90.14 & 7.50 & 80.12\\
\hline
& \framework-G & 96.51 & \underline{91.33} & \underline{76.29} & \underline{90.36}\\
& \framework-F & \underline{96.62} & \textbf{91.90} & 76.26 & \textbf{90.49}\\
SAGE & GraphSAGE & 94.55 & 90.57 & 72.99 & 82.35\\
& ClusterGCN & 94.74 & 90.55 & \textbf{77.43} & 83.54\\
& GraphSAINT & \textbf{97.46} & 90.15 & 75.21 & 85.35\\
\hline
\end{tabular}}
\label{to mini}
\end{center}
\vspace{-1em}
\end{table}

\subsection{Performance of \framework Compared to Mini-batch Training}

In this section, we compare \framework with different mini-batch training methods in terms of model performance. The memory usages of mini-batch training is related to the batch size, and it is more flexible than \framework in terms of memory consumption. However, as shown in Table~\ref{to mini}, generally \framework achieves better performance than the mini-batch methods. On Ogbn-products and Ogbn-proteins, \framework always outperforms the mini-batch methods and its improvements can achieve 1.7\% and 7.0\% respectively. On other datasets, \framework is either the best one or the second best but very close to the best one. Therefore, compared to the mini-batch training, \framework achieves high model performance.

\subsection{Parameter Sensitive Analysis of Two-step Sampling}

\begin{figure*}[ht]
    \centering
    \includegraphics[width=\textwidth]{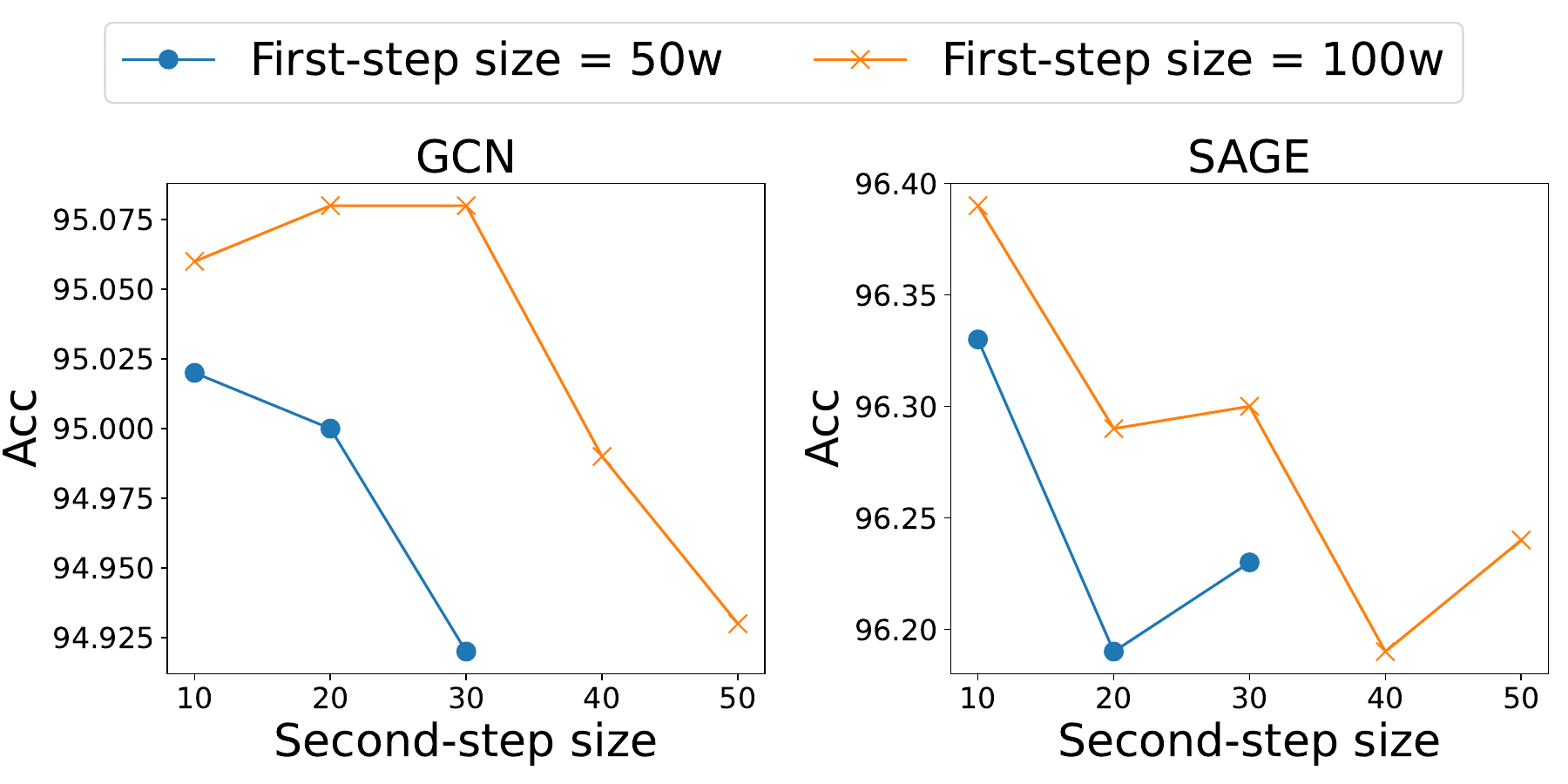}
    \centering
    \caption{the influence of two-step sampling sizes on GCN(left) and SAGE(right)}
    \label{sensitive}
\end{figure*}

In order to fully understand the influence of curriculum learning principles in \framework, we conduct parameter sensitivity analysis of two-step sampling. We test \framework-F on the Reddit dataset, and set the first step sampling size $|e|$ as 50w and 100w, and the second step sampling size $|e^{'}|$ varies from 10w to 50w. 

Figure~\ref{sensitive} presents the results. We see that GNNs' performance decreases obviously as second-step size increases. When the base model is GCN, the accuracy decrease by up to 0.15\%. Also, the similar phenomenon exists when the base model is SAGE. The accuracy shows a reduction of as much as 0.20\%. The reason of this trend is discussed as follows: the speed of edge growth in the spanning subgraph is directly determined by the size of $|e^{'}|$, which actually reflects the size of the curriculum learning's step. The larger the $|e^{'}|$ is, the larger the curriculum learning's step is. As a results, a larger $|e^{'}|$ coarsens the granularity of curriculum learning and reduces the positive effect from curriculum learning.

What's more, with a fixed value of $|e^{'}|$, the higher the value of $|e|$, the higher the accuracy is. This is because that when the ratio of $|e^{'}|$ to $|e|$ increases (i.e., small $|e|$), more low-probability edges are likely to be selected. And it implies that difficult samples are involved to the training early, thus destroying the positive effect of curriculum learning. 

\section{Related work} \label{Releted work}

\subsection{Memory-Efficient Graph Neural Networks}

Mini-batch training is an effective approach to reduce the memory consumption. Existing works do a lot of exploration on sampling methods with mini-batch training approach. The works~\cite{hamilton2017inductive},~\cite{chen2018stochastic} apply node-level sampling to select a set of nodes in neighbors. In this way, it reduce the number of each node's neighbors in the phase of aggregation. However, it can not resolve the problem of `neighbor explosion' when GNNs goes deeper. The works~\cite{2018FastGCN},~\cite{2019Ladies} apply layer-level sampling to select a fixed number of nodes in each GNN layer. Since this type of sampling methods use fixed number of nodes in the each layer, it can alleviate the `neighbor explosion'. However, FastGCN~\cite{2018FastGCN} suffers from unbalanced receptive fields. LADIES~\cite{2019Ladies} tracks each node's neighbors in the previous layer and calculates an importance estimator, but causes much overhead. The works~\cite{2019Cluster},~\cite{graphsaint} apply subgraph-level sampling to limit the aggregation field to a subgraph. ClusterGCN~\cite{2019Cluster} partitions the graph into a set of clusters and then randomly combines partitions to be a mini-batch. GraphSAINT~\cite{graphsaint} directly forms subgraphs with overlapping nodes among mini-batches. However, compared to Full-graph training, mini-batch training incurs information loss.

Even though almost no work explicitly discusses using spanning subgraph to reduce peak memory usage during GNNs training, there exist some close works. DropEdge~\cite{Dropedge} randomly removes a certain percentage of edges from the original input graph in each epoch. It alleviates the problem of over-smoothness~\cite{graziani2023no,bause2024sides} and over-fitting and can be considered as a strategy that uses spanning subgraph. TADropEdge~\cite{TADropedge} additionally considers the factors of graph structure. They analyze the graph connectivity and gives larger weight to keep inter-cluster edges in GNNs training.
NeuralSparse~\cite{NeuralSparse} applies a deep neural network to learn how to sparse graphs with the feedback of downstream prediction tasks. It improves generalization ability by removing potentially task-irrelevant edges. SGCN~\cite{li2020sgcn} also considers sparsification as an optimization problem and applies ADMM-based solution to solve it. But these works focus on improving the prediction results, overlooking the problem of peak memory usage. In addition, some other works directly delete or change the structure of GNNs model. SGC~\cite{wu2019simplifying} reduces redundant calculations by deleting the non-linear activation function between GCN layers. PPRGo~\cite{bojchevski2020scaling}, by calculating the influence matrix, avoids the overhead of collecting multi-hop neighbors. However, they fundamentally change the characteristics of GNNs, and cannot directly be applied to existing models.

\subsection{Curriculum Learning on GNN}
Recently, curriculum learning is introduced into GNNs training and achieves performance improvement in GNN models. CurGraph~\cite{wang2021curgraph} introduces curriculum learning to train GNNs with graphs in ascending order of difficulty. This method uses the informax technique for graph-level embeddings and a neural density estimator to model the embedding distributions. After calculating the difficulty scores of graphs, it first exposes GNN models to easy graphs and moves on to harder ones. They focus on the prediction of graphs. CLnode~\cite{wei2023clnode} defines the difficulty of samples at the level of the node and applies various pacing functions to train GNNs from easy-to-hard. It measures nodes' difficulty from the perspective of neighborhoods and features. RCL~\cite{zhang2024curriculum} considers that connections of nodes significantly affect the curriculum learning. It distinguishes the level of difficulty for edges and gradually incorporates more information at the level of edges. However, neither CLnode nor RCL takes the memory usage into account, and they need to train GNNs on the entire graphs. Differently, our work sets an upper bound of the number of edges and designs difficulty scoring function by fully considering the impact of the spanning subgraph.

\section{Conclusion}
In this paper, we proposed \framework that carries out GNNs training on large-scale graphs efficiently by using spanning subgraphs and integrating the principles of curriculum learning. \framework consists of two main components to limit the memory overhead and ensure the model performance. Quality-aware edge selection samples beneficial edges for spanning subgraph GNN training and follows the manner of curriculum learning to add edges for training. Graph update determines the size of the spanning subgraph at each epoch to control the peak memory. Overall, we provide an efficient large-scale GNN training method that can reduce memory overhead and maintain the model performance.


%
%
%
\bibliographystyle{splncs04}
\bibliography{references}

\end{document}